\newcommand{\makevector}[1]{{\mathbf #1}}
\newcommand{\bvec}{{\makevector{b}}}
\newcommand{\wvec}{{\makevector{w}}}
\newcommand{\xvec}{{\makevector{x}}}
\newcommand{\yvec}{{\makevector{y}}}
\newcommand{\zvec}{{\makevector{z}}}
\newcommand{\zerovec}{{\makevector{0}}}
\newcommand{\thetavec}{\boldsymbol{\theta}}
\newcommand{\RR}{{\mathbb{R}}}
\newcommand{\NN}{{\mathbb{N}}}
\newcommand{\Dmat}{\mathbf{D}}
\newcommand{\Xmat}{\mathbf{X}}
\newcommand{\Ymat}{\mathbf{Y}}
\newcommand{\Ppi}{\mathbf{P}^\pi}
\newcommand{\Rpi}{\mathbf{r}^\pi}
\newcommand{\Vpi}{\mathbf{V}^\pi}
\newcommand{\nsamples}{t}
\newtheorem{theorem}{Theorem}
\newcommand{\inlinervec}[2]{%
  \ensuremath{[\negthinspace\begin{smallmatrix}#1\ #2\end{smallmatrix}]}}
  \newcommand{\defeq}[1]{=}
  \newcommand{\gammarrow}{\overset{\Gamma}{\rightarrow}}
\newcommand{\States}{\mathcal{S}}
\newcommand{\Actions}{\mathcal{A}}
\newcommand{\dpi}{d^\pi}
\newcommand{\vparams}{\mathbf{w}}
\newcommand{\rparams}{\mathbf{B}}
\newcommand{\rep}{\boldsymbol{\phi}}
\newcommand{\repall}{\boldsymbol{\Phi}}
\newcommand{\repvec}{\boldsymbol{\phi}}
\newcommand{\rdim}{k}
\newcommand{\nstates}{|\States|}
\newcommand{\xdim}{d}
\newcommand{\regwgt}{\beta_{\phi}}
\newcommand{\fwgt}{1}
\newcommand{\regwgtb}{\beta_{B}}
\newcommand{\regwgtw}{\beta_{w}}
\newcommand{\eye}{\mathbf{I}}
\newcommand{\scalg}{SCoPE}
\newcommand{\triplet}{(\repall, \rparams,\vparams)}
\newcommand{\Xnew}{\Xmat_{\textrm{\tiny new}}}
\newcommand{\repnew}{\repall_{\textrm{\tiny new}}}
\newcommand{\nsamplesnew}{t_{\textrm{\tiny new}}}
\newcommand{\prev}{a}
\newcommand{\curr}{b}
\newcommand{\nex}{c}
\newcommand{\Amat}{\mathbf{A}}
\newcommand{\MC}{Mountain Car}
\newcommand{\PW}{Puddle World}
\newcommand{\AC}{Acrobot}
\title{Learning Sparse Representations in Reinforcement Learning with Sparse Coding}
\author{
Lei Le\thanks{These authors contributed equally.}\\
Dept. of Computer Science \\
Indiana University \\
Bloomington, IN, USA\\
leile@indiana.edu
\And
Raksha Kumaraswamy$^*$\\
Dept. of Computer Science \\
Indiana University \\
Bloomington, IN, USA\\
rakkumar@indiana.edu
\And
Martha White\\
Dept. of Computer Science \\
Indiana University \\
Bloomington, IN, USA \\
martha@indiana.edu
}
\begin{document}

\maketitle

\begin{abstract}
A variety of representation learning approaches have been investigated for
reinforcement learning; much less attention, however, has been given to investigating the
utility of sparse coding. 
Outside of reinforcement learning, sparse coding representations
have been widely used, with non-convex objectives that result in discriminative representations.
In this work, we develop a supervised sparse coding objective
for policy evaluation. Despite the non-convexity of this objective,
we prove that all local minima are global minima,
making the approach amenable to simple optimization strategies. 
We empirically show that it is key to use a supervised objective, rather than the more
straightforward unsupervised sparse coding approach. 
We compare the learned representations to a canonical fixed sparse representation, called tile-coding,
demonstrating that the sparse coding representation outperforms
a wide variety of tile-coding representations. 
\end{abstract}

\section{Introduction}

For tasks with large state or action spaces, 
where tabular representations are not feasible,
reinforcement learning algorithms
typically rely on 
function approximation.
Whether they are learning the value function, policy or models,
the success of function approximation techniques
hinges on the quality of the representation. Typically,
representations are hand-crafted,
with some common representations including tile-coding,
radial basis functions, polynomial basis functions and Fourier basis functions ~\citep{sutton1996generalization,konidaris2011value}. 
Automating feature discovery, however, alleviates this burden and
has the potential to significantly improve learning.

Representation learning techniques in reinforcement learning 
have typically drawn on the large literature in unsupervised and supervised learning.
Common approaches include
feature selection, including $\ell_1$ regularization on the value function parameters \citep{loth2007sparse,kolter2009regularization,nguyen2013online}
and matching pursuit \citep{parr2008analysis,wakefield2012greedy};
basis-function adaptation approaches \citep{menachie2005basis,whiteson2007adaptive};
instance-based approaches, such as locally weighted regression~\citep{atkeson2003nonparametric},
sparse distributed memories~\citep{ratitch2004sparse},
proto-value functions~\citep{mahadevan2007proto}
and manifold learning techniques \citep{mahadevan2009learning};
and neural network approaches,
including more standard feedforward neural networks \citep{coulom2002reinforcement,riedmiller2005neural,mnih2015human}
as well as random representations~\citep{sutton1993online},
linear threshold unit search~\citep{mahmood2013representation},
and evolutionary algorithms like NEAT~\citep{stanley2002efficient}.

Surprisingly, however, there has been little investigation into using sparse coding
for reinforcement learning.
Sparse coding approaches have been
developed to learn MDP models for transfer learning \citep{ammar2012reinforcement};
outside this work, however, little has been explored.
Nonetheless, such sparse coding representations have several advantages,
including that they naturally enable local models, are computationally efficient to use,
are much simpler to train than more complicated models such as neural networks
and are biologically motivated by the observed representation in the mammalian cortex \citep{olshausen1997sparse}.

In this work, we develop a principled sparse coding objective for policy evaluation.
In particular, we formulate a joint optimization over the basis and
the value function parameters, to provide a supervised sparse coding
objective where the basis is informed by its utility for prediction.
We highlight the importance of using the Bellman error or mean-squared return error
for this objective, and discuss how the projected Bellman error is not suitable. 
We then show that,
despite being a nonconvex objective,
all local minima are global minima, under minimal conditions. We avoid the
need for careful initialization strategies needed for previous optimality results for sparse coding 
\citep{agarwal2014learning,arora2015simple},
using recent results for more general dictionary learning settings \citep{haeffele2015global,le2016global}, particularly by extending
beyond smooth regularizers using $\Gamma$-convergence.
Using this insight,
we provide a simple alternating proximal gradient algorithm 
and demonstrate the utility of learning supervised sparse coding
representations versus unsupervised sparse coding and a variety of tile-coding representations.

\section{Background}

In reinforcement learning, an agent interacts with its environment, 
receiving observations and selecting actions to maximize a scalar 
reward signal provided by the environment. 
This interaction is usually modeled by a 
Markov decision process (MDP). 
An MDP consists of $(\States, \Actions, P, R)$ where $\States$ is the set of states; 
$\Actions$ is a finite set of actions; 
$P: \States \times \Actions \times \States \rightarrow [0,1]$, the transition function, which describes the probability of reaching a state 
$s'$ from a given state and action $(s,a)$; and finally the reward function 
$R: \States \times \Actions \times \States \rightarrow \RR$, which returns a scalar value for transitioning from state-action 
$(s,a)$ to state $s'$. The state of the environment is said to be 
\textit{Markov} if $Pr(s_{t+1} | s_{t} , a_t) = Pr(s_{t+1}| s_{t} , a_t, \dots, s_{0}, a_{0})$. 

One important goal in reinforcement learning is policy evaluation: 
learning the \textit{value function} for a policy.
A value function $\Vpi: \States \rightarrow \RR$
approximates the expected return. The return $G_t$ from a state $s_t$ is the total discounted future reward, discounted by $\gamma \in [0,1)$, 
for following policy
$\pi: \States \times \Actions \rightarrow [0,1]$
\begin{align*}
G_t &= \sum_{i=0}^\infty \gamma^i R_{t+1+i}
= R_{t+1} + \gamma G_{t+1}
\end{align*}
where $\Vpi(s_t)$ is the expectation of this return from state $s_t$.
This value function can also be thought of as a vector of values $\Vpi \in \RR^{\nstates}$
satisfying the Bellman equation
\begin{align}
\Vpi &= \Rpi + \gamma \Ppi \Vpi \label{eq:bellman}\\
%
\text{where } \ \ 
\Ppi (s,s') &= \sum_{a \in \Actions} \pi(s, a) P(s,a,s') \nonumber\\
\Rpi (s) &= \sum_{a \in \Actions} \pi(s, a) \sum_{s' \in \States} P(s,a,s') R(s,a,s') \nonumber
\end{align}
Given the reward function and transition probabilities,
the solution can be analytically obtained: 
$ \Vpi = (\eye - \gamma \Ppi)^{-1} \Rpi$.

In practice, however, we likely have a prohibitively large state space.
The typical strategy in this setting is to use function approximation to learn $\Vpi(s)$
from a trajectory of samples:
a sequence of
states, actions, and rewards $s_0$, $a_0$, $r_0$, $s_1$, $a_1$, $r_1$, $s_2$, $r_2$, $a_2 \ldots$,
where $s_0$ is drawn from the start-state distribution, 
$s_{t+1} \sim P(\cdot | s_t, a_t)$ and
$a_t \sim \pi(\cdot | s_t)$. 
Commonly, a linear function is assumed,
$\Vpi(s) \approx \repvec(s)^\top \vparams$
for $\vparams \in \RR^{\rdim}$ a parameter vector
and $\repvec: \States \rightarrow \RR^\rdim$ a feature function
describing states.
With this approximation, however, typically we can no
longer satisfy the Bellman equation in (\ref{eq:bellman}),
because there may not exist a $\wvec$ such that $\repall \vparams$
equals $\Rpi + \gamma \Ppi \repall \vparams$ 
for $\repall \in \RR^{\nstates \times \rdim}$.
Instead, we focus on minimizing the error 
to the true value function. 

Reinforcement learning algorithms, such as temporal difference learning
and residual gradient,
therefore focus on finding
an approximate solution to the Bellman equation,
despite this representation issue.
The quality of the representation is critical to accurately approximating $\Vpi$
with $\repall \wvec$, but also balancing compactness of the representation
and speed of learning. Sparse coding, and sparse representations, have proven successful
in machine learning and in reinforcement learning,
particularly as fixed bases, such as tile coding, radial basis functions and other kernel representations. 
A natural goal, therefore, and the one we explore in this work,
is to investigate learning these sparse representations automatically. 

\section{Sparse Coding for Reinforcement Learning}

In this section, we formalize sparse coding for reinforcement learning as
a joint optimization over the
value function parameters and the representation.
We introduce the true objective over all states, and then move to the sampled
objective for the algorithm in the next section.
%

We begin by formalizing the representation learning component.
Many unsupervised representation learning approaches consist of
factorizing input observations\footnote{This variable $\Xmat$ can also be a base set of features, 
on which the agent can improve or which the agent can sparsify.} 
$\Xmat \! \in \!\RR^{\nstates \times \xdim}$ into a basis dictionary $\rparams \! \in \!\RR^{\rdim \times \xdim}$ and 
new representation $\repall \in \RR^{\nstates \times \rdim}$. The rows of $\rparams$ form a set of bases,
with columns in $\repall$ weighting amongst those bases for each observation (column) in $\Xmat$.
Though simple, this approach encompasses a broad range of models, including 
PCA, CCA,
ISOMAP, locally linear embeddings and sparse coding \citep{singh2008unified,le2016global}.
The (unsupervised) sparse coding objective is \citep{aharon2006ksvd}
%
\begin{align*}
\min_{\repall \in \RR^{\nstates \times \rdim}, \rparams \in \RR^{\rdim \times \xdim}} \| \repall \rparams - \Xmat \|_D^2 + \regwgtb \| \rparams \|_F^2 + \regwgt \| \repall \|_{D,1}
\end{align*}
where $\| \Ymat \|_F^2 = \sum_{ij} \Ymat_{ij}^2$ is the squared Frobenius norm;
$\rparams \in  \RR^{\rdim \times \xdim}$ is a learned basis dictionary;
$\regwgtb,\regwgt > 0$ determine the magnitudes of the regularizers;
$\Dmat \in [0,1]^{\nstates \times \nstates}$ is a diagonal matrix giving a distribution over states,
corresponding to the stationary distribution of the policy $\dpi: \States \rightarrow [0,1]$; and
$||\zvec||^2_D = \zvec^\top \Dmat \zvec$ is a weighted norm.
The reconstruction error 
\begin{align*}
\| \repall \rparams - \Xmat \|_D^2 = \sum_{s \in \States} \dpi(s) \| \repall(s,:) \rparams - \Xmat(s,:) \|_2^2
\end{align*}
is weighted by the stationary distribution $\dpi$ because states are observed with frequency indicated by $\dpi$. 
The weighted $\ell_1$ 
\begin{align*}
\| \repall \|_{D,1} = \sum_{s \in \States} \dpi(s)  \sum_{j=1}^\rdim |\repall(s,j)|
\end{align*}
 promotes sparsity
on the entries of $\repall$, preferring entries in $\repall$ to be entirely pushed to zero
rather than spreading magnitude across all of $\repall$. 
The Frobenius norm regularizer on $\rparams$ ensures that $\rparams$ does not become too large.
Without this regularizer, all magnitude can be shifted to $\rparams$, producing
the same $\repall \rparams$, but pushing $\| \repall \|_{D,1}$ to zero and nullifying
the utility of its regularizer. 
Optimizing this sparse coding objective would select a sparse representation $\repvec$ for 
each observation $\xvec$ such that $\repvec \rparams$ approximately reconstructs $\xvec$.

Further, however, we would like to learn a new representation that is also 
optimized towards approximating the value function. 
Towards this aim, we need to jointly learn $\repall$ and $\wvec$, where $\repall \wvec$
provides the approximate value function. In this way, the optimization must balance between accurately
recreating $\Xmat$ and approximating the value function $\repall \wvec$.
For this, we must choose an objective for learning $\wvec$.

We consider two types of objectives:
fixed-point objectives and squared-error objectives. 
Two common fixed-point objectives are
the mean-squared Bellman error (MSBE), also called the Bellman residual~\citep{baird1995residual}
\begin{equation*}
\| \repall \vparams  - (\Rpi + \gamma \Ppi \repall \vparams) \|^2_D
\end{equation*}
and mean-squared projected BE (MSPBE)~\citep{sutton2009fast}
\begin{align*}
\| \repall \vparams  - \Pi (\Rpi + \gamma \Ppi \repall \vparams) \|^2_D 
\end{align*}
where $\Dmat \in [0,1]^{\nstates \times \nstates}$ is a diagonal matrix giving a distribution over states,
corresponding to the stationary distribution of the policy;
$||\zvec||^2_D = \zvec^\top \Dmat \zvec$ is a weighted norm;
and the projection matrix for linear value functions is $\Pi = \repall (\repall^\top \Dmat \repall)^{-1} \repall^\top \Dmat$. 
The family of TD algorithms converge to the minimum of the MSPBE, 
whereas residual gradient algorithms typically use the MSBE (see \citep{sun2015online} for an overview). 
Both have useful properties \citep{scherrer2010should},
though arguably the MSPBE is more widely used. 

There are also two alternative squared-error objectives, that do not correspond to fixed-point equations: 
the mean-squared return error (MSRE) and the Bellman error (BE). 
For a trajectory of samples $\{(\xvec_i, r_{i+1}, \xvec_{i+1})\}_{i=0}^{\nsamples-1}$,
BE is defined as
\begin{align*}
\sum_{i=0}^{\nsamples-1} \| r_{i+1} + \gamma \rep_{i+1}^\top \vparams - \rep_{i}^\top \vparams \|_2^2
\end{align*}
and the MSRE as
\vspace{-0.5cm}
\begin{align*}
\sum_{i=0}^{\nsamples-1} \| g_{i+1} - \rep_{i}^\top \vparams \|_2^2
\end{align*}
where $g_{i+1} = \sum_{j=i}^{\nsamples-1} \gamma^{j-i} r_{j+1}$ is a sample return. 
In expectation, these objectives are, respectively
 \small
\begin{align*}
&\!\!\!\sum_{s \in \States} \! \dpi\!(s) \mathbb{E} \!\!\left[ \!\!\left( r(S_t,A_t,S_{t+1}) \!+ \!\gamma \rep(S_{t+1})^\top \vparams - \rep(S_t)^\top \vparams \right)^2 \!\!| S_t \!\!= \!\!s \right]\\
&\!\!\!\sum_{s \in \States} \!\dpi\!(s) \mathbb{E} \!\left[\!\!\left(\sum_{i=0}^\infty \gamma^i r(S_{t+i},A_{t+i},S_{t+1+i}) - \rep(s)^\top \vparams \right)^2 \!\!\!\!| S_t \!\!= \!\!s \right]
\end{align*}
 \normalsize
 where the expectation is w.r.t. the transition probabilities and taking actions according to policy $\pi$. 
 
These differ from the fixed-point objectives because of the placement of the expectation.
To see why, consider the MSBE and BE. The expected value of the BE 
is the expected squared error between the prediction from this state and
the reward plus the value from a possible next state.
The MSBE, on the other hand, 
is the squared error between the prediction from this state and
the expected reward plus the expected value for the next state. 
Though the MSPBE and MSBE constitute the most common objectives chosen for reinforcement learning,
these squared-error objectives have also been shown to be useful particularly for learning online \citep{sun2015online}.

For sparse coding, 
however, 
the MSPBE is not a suitable choice---compared to the MSBE, BE and MSRE---for two reasons.
First, the MSBE, BE and MSRE are all convex in $\repall$, whereas the MSPBE is not. 
Second, 
because of the projection onto the space spanned by the features,
the MSPBE can be solved with zero error for any features $\repall$.
Therefore, because it does not inform the choice of $\repall$, the MSPBE produces a two stage approach\footnote{This problem seems to have been overlooked in
two approaches for basis adaptation based on the MSPBE:
adaptive bases algorithm for the projected Bellman error (ABPBE) \cite[Algorithm 9]{castro2010adaptive}
and 
mirror descent Q($\lambda$) with basis adaptation \citep{mahadevan2013basis}.
For example, for ABPBE, it is not immediately obvious this would be a problem,
because a stochastic approximation approach is taken. However, if written as a minimization
over the basis parameters and the weights, one would obtain a minimum error solution (i.e., error zero)
immediately for any basis parameters. The basis parameters are considered to change on a slow timescale,
and the weights on a fast timescale, which is a reflection of this type of separate minimization.
\citet{menachie2005basis} avoided this problem by explicitly using a two-stage approach, using MSPBE approaches for learning
the parameters and using other score function, such as the squared Bellman error, to
update the bases. This basis learning approach, however, is unsupervised.\\
Representation learning strategies for the MSPBE have been developed, by using local projections \citep{yu2009basis,maei2009convergent}. These strategies, however, do not incorporate sparse coding. 
}, where 
features are learned in a completely unsupervised way and prediction
performance does not influence $\repall$.

The final objective for loss $\textrm{L}(\repall, \vparams)$ set to either MSBE, BE or MSRE is
\begin{align}
\min_{\vparams \in \RR^{\rdim},\repall \in \RR^{\nstates \times \rdim}, \rparams \in \RR^{\rdim \times \xdim}}
&\textrm{L}(\repall, \vparams) + \| \repall \rparams - \Xmat \|_D^2 \label{eq_scope}\\
 &+ \regwgtw \| \vparams \|_2^2 + \regwgtb \| \rparams \|_F^2 + \regwgt\| \repall \|_{D,1} \nonumber
\end{align}

\section{Algorithm for Sparse Coding}

We now derive the algorithm for sparse coding for policy evaluation: \scalg. 
We generically consider either the BE or MSRE. 
For a trajectory of samples $\{(\xvec_i, r_{i+1}, \xvec_{i+1})\}_{i=0}^{\nsamples-1}$, 
the objective is
\begin{align}
&\min_{\vparams \in \RR^{\rdim},\repall \in \RR^{\nsamples+1 \times \rdim}, \rparams \in \RR^{\rdim \times \xdim}} 
\frac{1}{\nsamples}\sum_{i=0}^{\nsamples-1} ( y_i + \bar{\gamma} \rep_{i+1}^\top \vparams - \rep_{i}^\top \vparams )^2 \label{eq_sampleloss}\\
&\!\!+ \frac{\fwgt}{\nsamples} \!\sum_{i=0}^{\nsamples} \!\| \rep_i \rparams - \xvec_i\|_2^2 
 + \! \regwgtb \| \rparams \|_F^2
 + \! \regwgtw \| \vparams \|_2^2 
\! + \! \frac{\regwgt}{\nsamples}  \sum_{i=0}^{\nsamples} \|\repvec_i \|^p_{1}
.\nonumber
\end{align}
for BE, $y_i = r_{i+1}$ and $\bar{\gamma} = \gamma$
and for MSRE,
$y_i = \sum_{j=i}^{\nsamples} \gamma^{j-i} r_{j+1}$ and $\bar{\gamma} = 0$. 
We consider two possible powers for the $\ell_1$ norm $p = 1$ or $2$,
where the theory relies on using $p=2$, but in practice we find they perform equivalently
and $p=1$ provides a slightly simpler optimization.
The loss is averaged by $\nsamples$, to obtain a sample average, which in the limit converges
to the expected value under $\dpi$. 
This averaged loss is also more scale-invariant---in terms of the numbers of samples---to the choice of regularization parameters.

\scalg\ consists of alternating amongst these three variables, $\rparams, \vparams$ and $\repall$,
with a proximal gradient update for the non-differentiable $\ell_1$ norm.
The loss in terms of $\rparams$ and $\vparams$ is differentiable; to solve for $\rparams$ (or $\vparams$) with the other variables fixed,
we can simply used gradient descent. 
To solve for $\repall$ with the $\rparams$ and $\vparams$ fixed, however, we cannot use
a standard gradient descent update because the $\ell_1$ regularizer is non-differentiable. 
The proximal update consists of stepping in the direction of the gradient for the smooth component
of the objective---which is differentiable---and then projecting back to a sparse solution
using the proximal operator: a soft thresholding operator. 
The convergence of this alternating minimization follows from results on block coordinate descent
for non-smooth regularizers \citep{xu2013ablock}.

To apply the standard proximal operator for the $\ell_1$ regularizer,
we need to compute an upper bound on the Lipschitz constant for this objective. 
The upper bound is $2(1+\bar{\gamma}^2)\|\vparams\|_2^2 + 2\|\rparams\|_{sp}^2$,
computed by finding the maximum singular value
of the Hessian of the objective w.r.t. $\rep_i$ for each $i$. We will provide
additional details for this calculation, and implementation details, in a supplement. 


\subsection{Local Minima Are Global Minima}

In this section, we show that despite nonconvexity,
the objective for \scalg\ has the nice property that all local 
minima are in fact global minima.
Consequently, though there may be many different local minima,
they are in fact equivalent in terms of the objective. 
This result justifies a simple alternating minimization scheme, where convergence 
to local minima ensures an optimal solution is obtained. 


\newcommand{\regth}{f}
\newcommand{\lossth}{L}
\newcommand{\paramset}{\Theta}
\newcommand{\params}{\thetavec}

We need the following technical assumption. It is guaranteed to be true for a sufficiently large $\rdim \le \nsamples$ (see 
\cite{haeffele2015global,le2016global}).
%

\begin{figure*}[ht]
\vspace{-0.5cm}
\begin{subfigure}[b]{0.345\textwidth}
        \includegraphics[width=\textwidth]{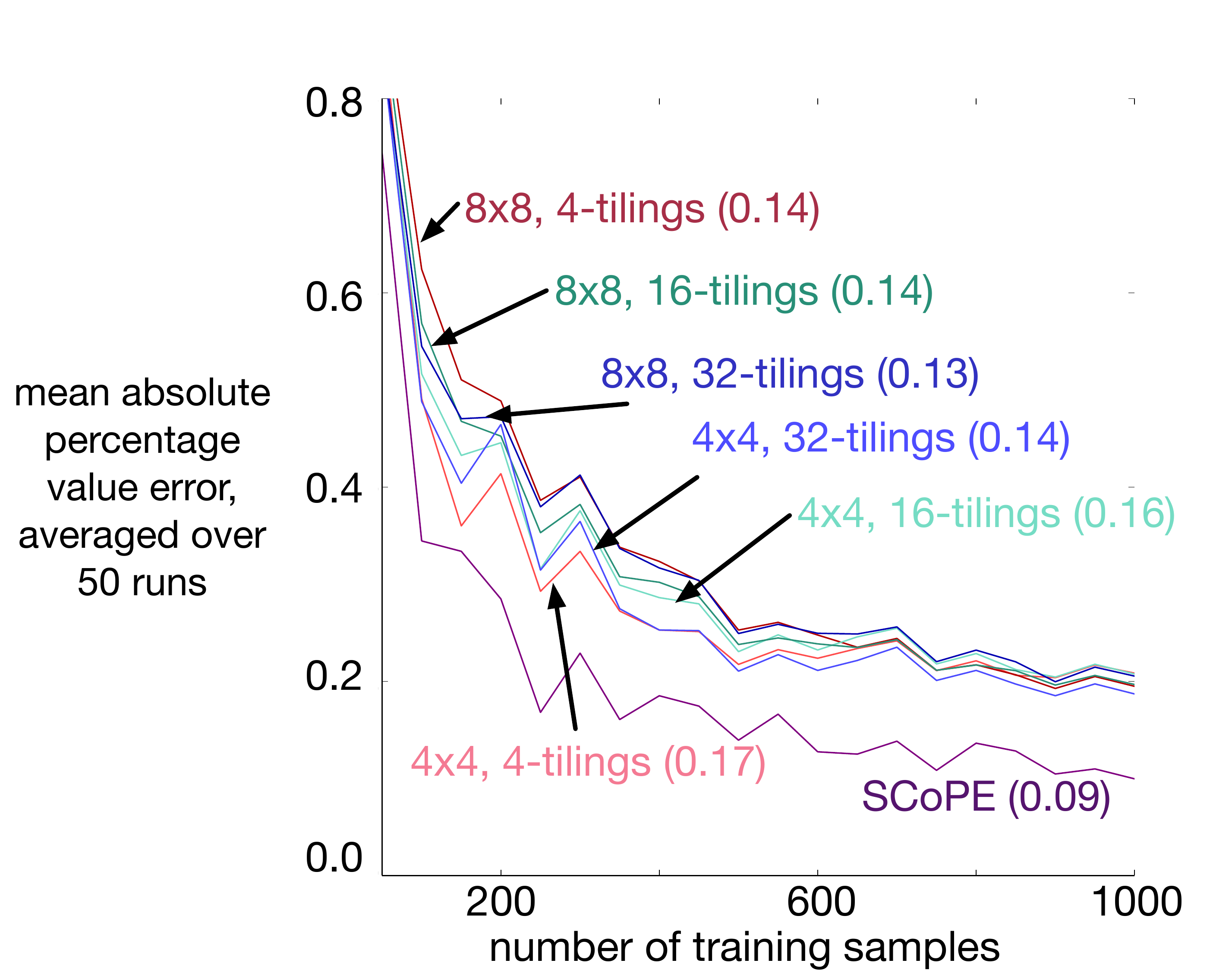}
        \caption{Mountain Car}
        \label{fig:MC}
    \end{subfigure}
\begin{subfigure}[b]{0.27\textwidth}
        \includegraphics[width=\textwidth]{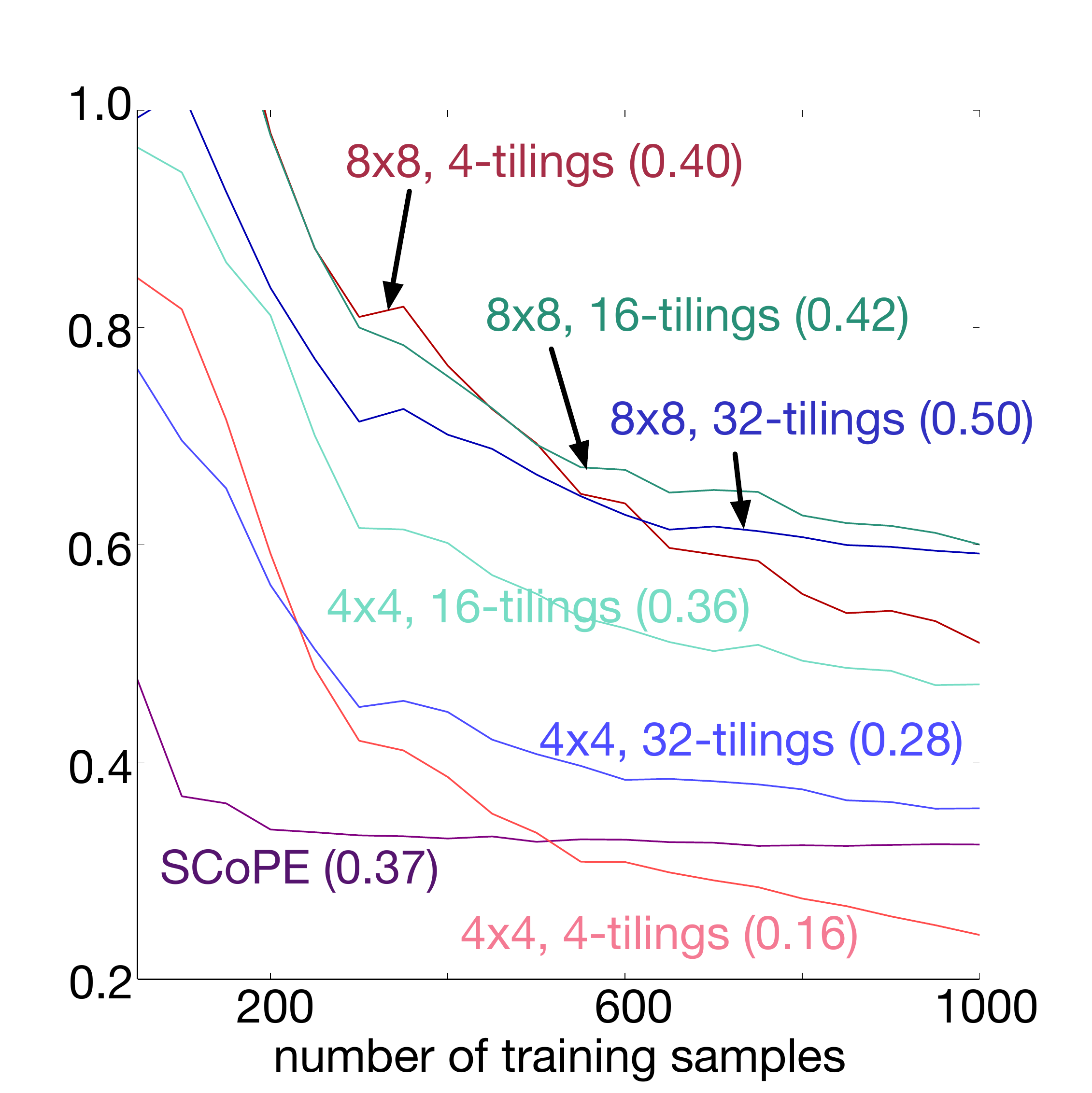}
        \caption{Puddle World }
        \label{fig:PW}
    \end{subfigure}
\begin{subfigure}[b]{0.275\textwidth}
        \includegraphics[width=\textwidth]{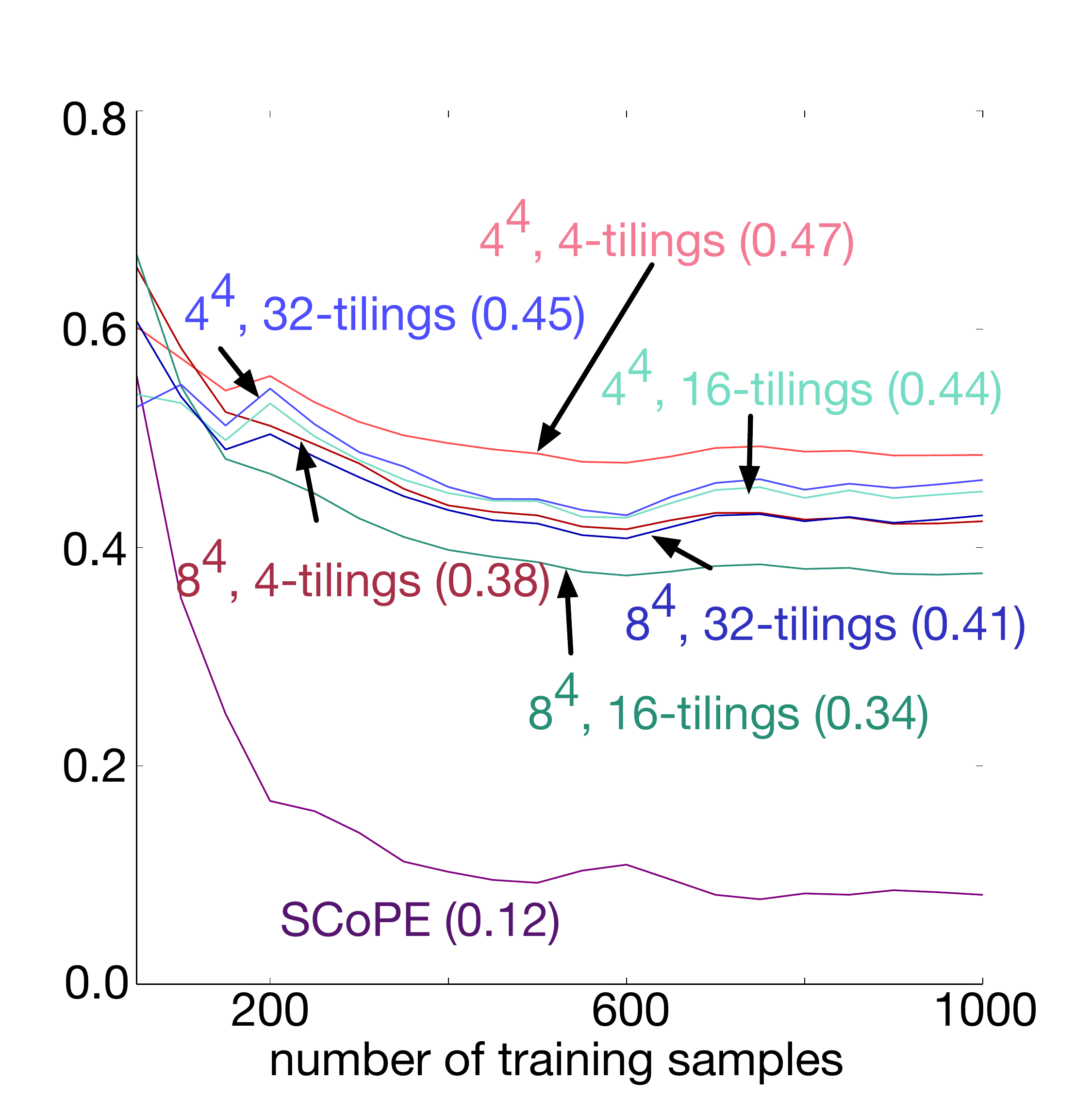}
        \caption{Acrobot }
        \label{fig:AC}
    \end{subfigure}    
\caption{Learning curves for \scalg\ versus a variety of tile coding representations in three domains. 
The graphs depict early learning; the numbers
in brackets correspond to final error, after 5000 samples. Because errors are sampled every 50 samples, and because
MSRE is used for optimization, the lines have an irregular pattern. The differences are nonetheless statistically significant, 
with an average over 50 runs, and so the small standard error bars are omitted. \scalg\ outperforms the best of the TC representations in \MC\ and \AC\, using a more compact sparse representation; in \PW, it performs more poorly, which we discuss further in the text. The larger TC representations likely perform poorly 
due to hashing.}
\label{fig:LC}
\vspace{-0.2cm}
\end{figure*}

\vspace{0.05cm}
\noindent
\textbf{Assumption 1}
\textit{For the given $\rdim \ge \xdim$, the following function is convex in $\mathbf{Z} \in \RR^{\nsamples+1 \times \xdim+1}$ 
\vspace{-0.5cm}
\begin{equation*}
\min_{\repall, \rparams, \vparams, \mathbf{Z} = \repall \inlinervec{\rparams}{\vparams}} \regwgtb \| \rparams \|_F^2 + \regwgtw \| \vparams \|_2^2 + \frac{\regwgt}{\nsamples}  \sum_{i=0}^{\nsamples} \|\repvec_i \|^2_{1}
\end{equation*}
}
\begin{theorem}[Landscape of the \scalg\ objective]\label{thm_global}
For the objective in equation \eqref{eq_sampleloss} with $p = 2$, 

\noindent
1. under Assumption 1, all full-rank local minima are global minima; and

\noindent
2. if a local minimum $(\repall, \rparams, \vparams)$ has $\repall_{:i} = \zerovec$ (i.e., a zero column) and $\vparams_i = 0$, $\rparams_{i:} = \zerovec$ (i.e., a zero row) for some $1 \le i \le \rdim$, then it is a global minimum. 
\end{theorem}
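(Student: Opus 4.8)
The plan is to recognize \eqref{eq_sampleloss} as a matrix factorization problem in which a \emph{convex} loss is applied to the product $\joint = \repall \inlinervec{\rparams}{\vparams}$, and then to invoke the local-implies-global machinery for such factorizations \citep{haeffele2015global,le2016global}. First I would collect the two data-fidelity terms into a single function of $\joint \in \RR^{(\nsamples+1)\times(\xdim+1)}$: the first $\xdim$ columns of $\joint$ equal the reconstructions $\repall\rparams$ and the last column equals the predictions $\repall\vparams$. The reconstruction error is a convex quadratic in the first block, and the BE/MSRE term $\frac{1}{\nsamples}\sum_i (y_i+\bar{\gamma}\repvec_{i+1}^\top\vparams-\repvec_i^\top\vparams)^2$ is a convex quadratic in the prediction column; since the two blocks are disjoint, the combined loss $\ell(\joint)$ is convex in $\joint$.

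Next I would define the induced regularizer
\begin{equation*}
\Omega(\joint) = \min_{\repall,\rparams,\vparams:\ \repall\inlinervec{\rparams}{\vparams}=\joint}\ \regwgtb\|\rparams\|_F^2 + \regwgtw\|\vparams\|_2^2 + \frac{\regwgt}{\nsamples}\sum_{i=0}^{\nsamples}\|\repvec_i\|_1^2,
\end{equation*}
which is exactly the quantity that Assumption 1 declares convex. Minimizing out the factorization at a fixed product shows that the factored objective \eqref{eq_sampleloss}, call it $G(\repall,\rparams,\vparams)$, satisfies $\min_{\repall,\rparams,\vparams} G = \min_\joint F(\joint)$ with $F(\joint):=\ell(\joint)+\Omega(\joint)$ convex. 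Moreover every feasible point obeys $G(\repall,\rparams,\vparams)\ge F(\joint)\ge \min_\joint F$, since $\Omega$ is the minimal attainable regularizer for a given $\joint$. The task then reduces to showing that at a local minimum this chain is tight, i.e. that $\joint$ minimizes the convex $F$ and the factorization attains $\Omega(\joint)$.

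For the core argument I would exploit the positive $2$-homogeneity of every regularizer in the paired columns $(\repall_{:,i},[\rparams_{i,:}\ \vparams_i])$. Writing $\joint^\star$ for the convex minimizer of $F$, non-optimality of the current $\joint$ is equivalent, through the polar of $\Omega$, to the existence of a rank-one atom $uv^\top$ whose gain $-\langle\nabla\ell(\joint),uv^\top\rangle$ strictly exceeds its regularizer cost. In case 2, the hypothesized zero column pair supplies an empty slot into which this atom can be injected: setting $(\repall_{:,i},[\rparams_{i,:}\ \vparams_i])=(su,sv)$ perturbs both the loss and the $2$-homogeneous regularizer at order $s^2$, and the leading coefficient is negative precisely when $\joint\ne\joint^\star$, contradicting local minimality; hence the point is global. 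In case 1, full column rank of $\repall$ lets me convert the factored stationarity conditions in $\repall$ and $\inlinervec{\rparams}{\vparams}$ into the convex first-order (polar) condition at $\joint$, again forcing $\joint=\joint^\star$.

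The main obstacle is that the sparsity regularizer $\frac{\regwgt}{\nsamples}\sum_i\|\repvec_i\|_1^2$ is both nonsmooth and \emph{not} separable across the latent columns, so the cited optimality theorems---stated for smooth, column-wise regularizers---do not apply verbatim. I would handle this by a $\Gamma$-convergence argument: approximate $\|\cdot\|_1$ by a sequence of smooth, $2$-homogeneous surrogates for which the known results hold, check that the associated objectives $\Gamma$-converge to \eqref{eq_sampleloss}, and transfer the local/global characterization to the limit. This is exactly where the power $p=2$ is indispensable, since degree-$2$ homogeneity is what makes $\Omega$ a gauge-type convex function and what balances the $O(s^2)$ atom-injection estimate; the practically convenient $p=1$ variant falls outside the theory.
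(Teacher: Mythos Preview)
Your proposal is correct and follows essentially the same route as the paper: recast \eqref{eq_sampleloss} as a convex loss in the joint variable $\joint=\repall\inlinervec{\rparams}{\vparams}$, use Assumption~1 to make the induced regularizer convex, invoke the factorization results of \citet{haeffele2015global,le2016global} (atom injection for the zero-column case, full-rank stationarity transfer for the first case), and bridge the nonsmooth $\ell_1^2$ term via $\Gamma$-convergence of smooth surrogates. The paper carries out the $\Gamma$-convergence step concretely with the pseudo-Huber approximation---proving uniform convergence and equi-coercivity explicitly---whereas you unpack more of the intuition behind the cited theorems; both land on the same argument.
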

\begin{proof}
\textbf{For the first statement}, we construct a limit of twice-differentiable functions $f_n$ that $\Gamma$-converge to
the \scalg\ objective $f$. With this, we can then show that all minimizers of the sequence
converge to minimizers of $f$, and vice-versa \citep{braides2013local}. 
Because all local minimizers of the twice-differentiable functions $f_n$
are global minimizers from \cite[Theorem 10]{le2016global}, we can conclude that
all corresponding minimizers of $f$ are global minimizers. 

We use the pseudo-Huber loss \citep{fountoulakis2013asecond},
which is twice-differentiable approximation to the absolute value:
%
$|x|_{\mu} = \sqrt{\mu^2 + x^2} - \mu$.
%
Let $\params = (\repall, \rparams, \vparams)$. The sequence of functions $f_n$ are defined with
$\mu_n = 1/n$, as
\begin{align*}
f_n(\params) = L(\params) + \frac{\regwgt}{\nsamples}\sum_{ij} \sqrt{\mu_n^2 + \repall_{ij}^2} - \mu_n
\end{align*}
where $L(\params)$ equals the equation in \eqref{eq_sampleloss}, but without the $\ell_1^2$ regularizer on $\repall$. 

\noindent
\textbf{Part 1:} All local minima of $f_n$ for all $n$ are global minima.
To show this, we show each $f_n$ satisfies the conditions of \cite[Theorem 10 and Proposition 11]{le2016global}.
%

\noindent
\textbf{Part 1.1} 
We can rewrite the loss in terms
of $\repall \inlinervec{\rparams}{\vparams}$
\vspace{-0.25cm}
\begin{align}
&\!\!\!\!\frac{1}{\nsamples}\sum_{i=0}^{\nsamples-1} \| y_{i} + \bar{\gamma} \rep_{i+1}^\top \vparams - \rep_{i}^\top \vparams \|_2^2  
+ \frac{\fwgt}{\nsamples} \sum_{i=0}^{\nsamples} \| \rep_i \rparams - \xvec_i\|_2^2 \nonumber\\
&\!\!\!\!=
\frac{1}{\nsamples} \left\| \Xmat -  \repall \rparams \right\|_2^2
\!+\! \frac{1}{\nsamples} \left\| \mathbf{y} -  \left(\eye_{0:\nsamples-1} \!-\! \bar{\gamma} \eye_{1:\nsamples} \right)\repall \vparams \right\|_2^2
 \label{eq_joint}
\end{align}
where $\eye_{1:\nsamples} \in \RR^{\nsamples \times \nsamples+1}$ a diagonal matrix of all ones with the first diagonal entry set to zero, and $\eye_{0:\nsamples-1}$ with the last diagonal entry set to zero. 
This loss
is convex in the joint variable $\repall \inlinervec{\rparams}{\vparams}$ because equation \eqref{eq_joint} is
the composition of a convex function (squared norm) and an affine function (multiplication by $\bar{\gamma} \eye_{1:\nsamples} - \eye_{0:\nsamples-1}$ and addition of $\mathbf{y}$).

\noindent
\textbf{Part 1.2}
The regularizer on  $\inlinervec{\rparams}{\vparams}$ must be a weighted Frobenius norm, with weightings on each column;
here, we have weighting using regularization parameters $\regwgtb$ for the first $\xdim$ columns (corresponding to $\rparams$)
and regularization parameter $\regwgtw$ for the last column (corresponding to $\vparams$).

\noindent
\textbf{Part 1.3}
The inner dimension $\rdim > \xdim$, which is true by assumption
and the common setting for sparse coding.

\noindent
\textbf{Part 1.4}
The pseudo-Huber loss, on the columns of $\repall$, is convex, centered and twice-differentiable.

\vspace{0.2cm}
\noindent
\textbf{Part 2:}
The sequence $f_n$ converges uniformly to $f$. To see why, recall the definition
of uniform convergence. A sequence of functions $\{ f_n \}$ is uniformly convergent with limit $f$
if for every $\epsilon > 0$, there exists $N \in \NN$ such that for all $\params \in \paramset$ all $n \ge N$,
$|f_n(\params) - f(\params)| < \epsilon$. 
Further recall that for any complete metric space, if $f_n$ is uniformly Cauchy, then it is uniformly convergent.
The sequence is uniformly Cauchy if for all $n,m \ge N$, $|f_n(\params) - f_m(\params)| < \epsilon$.
Take any $\epsilon > 0$ and let $N = \lceil \tfrac{4 \rdim(\nsamples+1)\regwgt}{\nsamples\epsilon} \rceil$. Then 
\begin{align*}
&|f_n(\params) - f_m(\params)| \\
&=  \frac{\regwgt}{\nsamples}\left|\!\!\left( \!\!\sum_{ij} \sqrt{\mu_n^2 + \repall_{ij}^2} - \mu_n\right) \!\! -\!\! \left(\!\!\sum_{ij} \sqrt{\mu_m^2 + \repall_{ij}^2} - \mu_m \right)\!\! \right|\\
&\le \frac{\regwgt}{\nsamples}\sum_{ij} \left(\left| \sqrt{\mu_n^2 + \repall_{ij}^2} - \sqrt{\mu_m^2 + \repall_{ij}^2} \right| + \left| \mu_n -\mu_m \right| \right)
\end{align*}
The upper bound of the first component is maximized when $\repall_{ij} = 0$, and so we get
\begin{align*}
\!\! |f_n(\params) \!-\! f_m(\params)| &\! \le  \! \tfrac{2\rdim(\nsamples+1)\regwgt}{\nsamples} \left| \mu_n \!-\! \mu_m \right|
\! \le \! \tfrac{2\rdim(\nsamples+1)\regwgt}{\nsamples} \left| \tfrac{1}{n} \! -\! \tfrac{1}{m} \right|\\
&\! \le \! \tfrac{4\rdim(\nsamples+1)\regwgt}{\nsamples N} \le \epsilon
.
\end{align*}

%

%

\noindent
\textbf{Part 3:} Asymptotic equivalence of minimizers of $f_n$ and $f$. 
Because $f$ is continuous, and so lower semi-continuous, and $f_n$ uniformly converges to $f$,
we know that $f_n$ $\Gamma$-converges to $f$: $f_n \gammarrow f$ \cite{braides2013local}. 

By the fundamental theorem of $\Gamma$-convergence,
if the $\{ f_n \}$ is an equi-coercive family of functions, then
the minimizers of $f_n$ converge to minimizers of $f$. 
A sequence of functions $\{ f_n \}$ is equi-coercive iff
there exists a lower semi-continuous coercive function $\psi: \paramset \rightarrow \RR \cup \{ -\infty, \infty \}$ such that
$f_n \ge \psi$ on $\paramset$ for every $n \in \NN$ \cite[Proposition 7.7]{maso2012anintroduction}. A function is coercive if $\psi(\theta) \rightarrow \infty$
as $\| \theta \| \rightarrow \infty$. For $\psi(\theta) = L(\params)$, it is clear that $\psi$ is coercive, as well as lower semi-continuous (since it is continuous). Further, $f_n(\params) \ge L(\params) = \psi(\params)$, because the regularizer
on $\repall$ is non-negative. Therefore, the family $\{ f_n \}$ is equi-coercive, and so the minimizers of $f_n$ converge to minimizers of $f$.

For the other direction, if a local minimum $\params$ of $f$ is an isolated local minimum, then there exists a sequence $\params_n \rightarrow \params$ with $\params_n$ a local minimizer of $f_n$ for $\mu_n$ sufficiently small \cite[Theorem 5.1]{braides2013local}.
Because we have Frobenius norm regularizers on $\rparams, \vparams$, which are strongly convex,
the objective is strictly convex with respect to $\rparams, \vparams$. Further, because $\Xmat$ is full rank,
 $\| \repall \rparams - \Xmat \|_F^2 $ is a strictly convex function with respect to $\repall$.
 Therefore, locally the objective is strictly convex with respect to $\params$. We therefore know that local minima of $f$ are isolated, and so there exists an $N$ such that for all $n > N$, $\params_n$ are local minimizers of $f_n$. Since these local minimizers are global minimizers, 
and they converge to $\params$, this means $\params$ is a global minimum of $f$.  
%
%

\textbf{For the second statement}, we use \citep[Theorem 15]{haeffele2015global}. Because we already showed above that our loss can be cast as factorization, it is clear our loss and regularizers are positively homogenous, of order 2. A minimum is guaranteed to exist for our objective, because the loss function is continuous, bounded below (by zero) and goes to infinity as the parameters go to $\pm \infty$.
\end{proof}

%
%


%

\vspace{-0.3cm}
\section{Experimental Results}

We aim to address the question: can we learn useful representations using \scalg? We therefore tackle the setting where the representation is first learned, and then used, to avoid conflating incremental estimation and the utility of the representation. We particularly aim to evaluate estimation accuracy, as well as qualitatively understanding the types of sparse representations learned by \scalg. 


\vspace{0.1cm}
\noindent
\textbf{Domains.} 
We conducted experiments in three benchmark RL domains - Mountain Car, Puddle World and Acrobot \citep{sutton1996generalization}. 
All domains are episodic, with discount set to 1
until termination. 
%
The data in \MC\ is generated using the standard energy-pumping policy policy with 10\% randomness. 
The data in \PW\ is generated by a policy that chooses to go North with 50\% probability, and East with 50\% probability on each step, with the starting position in the lower-left corner of the grid, and the goal in the top-right corner. 
The data in \AC\ is generated by a near-optimal policy.


\vspace{0.1cm}
\noindent
\textbf{Evaluation.}
We measure value function estimation accuracy using
 mean absolute percentage value error (MAPVE), with rollouts to compute the true value estimates. 
 $\text{MAPVE} = \frac{1}{t_{test}} \sum_{s\in X_{test}}\frac{|\hat{V}(s) - V^*(s)|}{|V^*(s)|}$, where 
$X_{test}$ is the set of test states,
$t_{test}=5000$ is the number of samples in the test set,
 $\hat{V}(s)$ is the estimated value of state $s$
 and $V^*(s)$ is the true value of state $s$ computed using extensive rollouts.
 Errors are averaged over 50 runs. 
%

\vspace{0.1cm}
\noindent
\textbf{Algorithms.}
We compare to using several fixed tile-coding (TC) representations. TC uses overlapping grids on the observation space. It is a sparse representation that is well known to perform well for \MC, \PW, and \AC. 
We varied the granularity of the grid-size N and number of tilings D, 
where $D$ is the number of active features for each observation. 
The grid is either N$\times$N for \MC\ and \PW\ or N$^4$ for \AC.  
We explore (D=4, N=4), (D=4,N=8), (D=16,N=4), (D=16,N=8), (D=32,N=4), (D=32,N=8); a grid size of 16 performed poorly, and so is omitted.  
For \MC\ and \PW\, the number of features
respectively  are 64, 256, 256, 1024, 512, 2048, then hashed to 1024 dimensions;
for \AC, the number of features are 1024, 16384, 4096, 65536, 8192, 131072, then hashed to 4096.
Both of these hashed sizes are much larger than our chosen $\rdim = 100$. 

For consistency, once the \scalg\ representation is learned, we use the same 
batch gradient descent update on the MSRE for all the algorithms, with line search to select step-sizes.  
The regularization weights $\regwgtb$ are chosen from $\{1^{-5}, \ldots, 1^{-1}, 0\}$, based on lowest cumulative error. For convenience, $\regwgtw$ is fixed to be the same as $\regwgtb$. 
For learning the \scalg\ representations, regularization parameters were chosen using 
5-fold cross-validation on 5000 training samples, with $\regwgt = 0.1$ fixed to give a reasonable level of sparsity.
This data is only used to learn the representation; for the learning curves, the weights are learned from scratch
in the same way they are learned for TC. 
The dimension $\rdim = 100$ is set to be smaller than for tile coding, to investigate if \scalg\ can learn a more compact
sparse representation. 
We tested unsupervised sparse coding, but the error was poor (approximately $10\times$ worse). 
We discuss the differences between the representations learned by supervised and unsupervised sparse coding below. 

 \begin{figure*}[ht]
 \vspace{-0.2cm}
 \centering
 \includegraphics[scale=0.34]{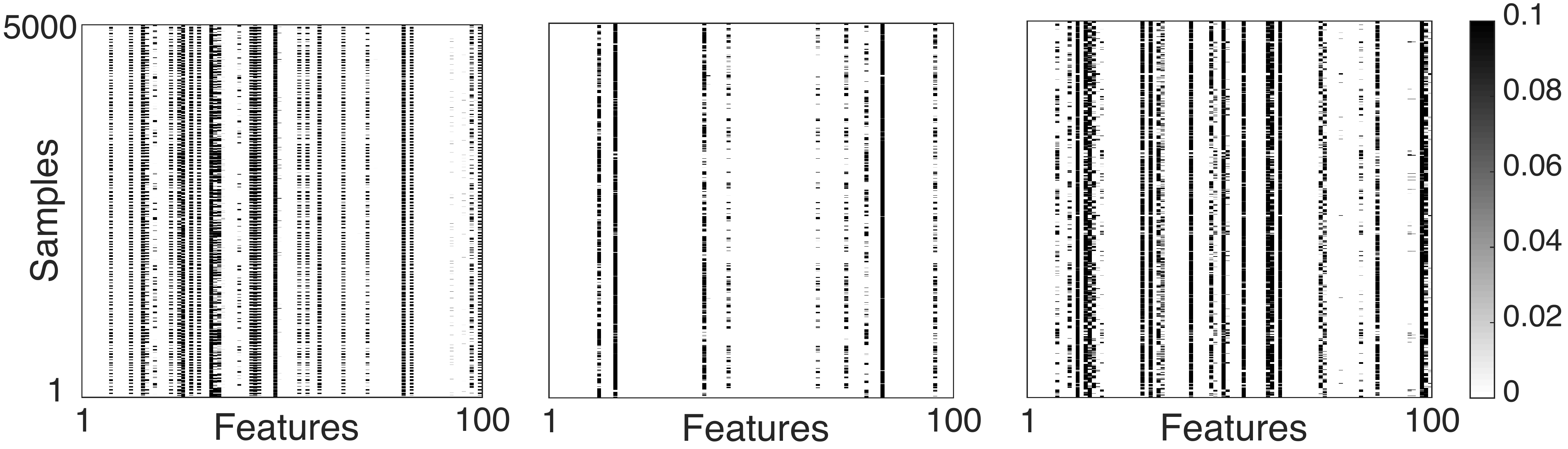}
\caption{Learned representations $\repall$ for 5000 samples and $\rdim = 100$, respectively for \scalg, unsupervised sparse coding, and non-negative unsupervised sparse coding, in \PW. The representations learned for \MC\ and \AC\ have similar structure. 
The non-negative sparse coding has the additional constraint that each entry in $\repall$ is non-negative. 
The goal for this addition was to determine if further constraints could improve prediction accuracy for
unsupervised sparse coding; though the representation qualitatively looks more reasonable, prediction performance remained poor. 
}
\label{fig:Representations}
\vspace{-0.4cm}
\end{figure*}

\vspace{0.1cm}
\noindent
\textbf{Learning curves.}
We first demonstrate learning with increasing number of samples, in Figure \ref{fig:LC}. The weights are recomputed using the entire batch
up to the given number of samples.

Across domains, \scalg\ results in faster learning and, in \MC\ and \AC, obtains lowest final error. 
Matching the performance of TC is meaningful, as TC is well-understood and optimized for these domains.
For \AC, it's clear a larger TC is needed resulting in relatively poor performance, whereas \scalg\ can still
perform well with a compact, learned sparse representation. 
These learning curves provide some insight that we can learn effective sparse representations with \scalg, 
but also raise some questions. One issue is that \scalg\ is not as effective in \PW\ as some of the TC representations,
namely 4-4 and 16-4. The reason for this appears to be that we optimize MSRE to obtain the representation, which is a surrogate
for the MAPVE. When measuring MSRE instead of MAPVE on the test data, \scalg\ consistently outperforms TC.
Optimizing both the representation and weights according to MSRE may have overfitting issues;
extensions to MSBE or BE, or improvements in selecting regularization parameters, may alleviate this issue.

\vspace{0.1cm}
\noindent
\textbf{Learned representations.}
We also examine the learned representations, both for unsupervised sparse coding
and \scalg, shown in Figure \ref{fig:Representations}. We draw two conclusions from these
results: the structure in the observations is not sufficient for unsupervised sparse coding,
and the combination of supervised and unsupervised losses sufficiently constrain the space
to obtain discriminative representations. 
For these two-dimensional and four-dimensional observations, 
it is relatively easy to reconstruct the observations by using only a small subset of dictionary atoms (row vectors of $\rparams$ in equation \eqref{eq_scope}). The unsupervised representations, even with additional non-negativity constraints to narrow the search space,
 are less distributed, with darker and thicker blocks, and more frequently pick less features.  
 For the supervised sparse coding representation, however, the sparsity pattern is smoother and more distributed: more features are selected by at least one sample, but the level of sparsity is similar. 
 We further verified the utility of supervised sparse coding, by only optimizing the supervised loss (MSRE), without including the unsupervised loss;
 the resulting representations looked similar to the purely unsupervised representations. 
 The combination of the two losses, therefore, much more effectively constrains or regularizes
 the space of feasible representations and improves discriminative power.

The learning demonstrated for \scalg\ here is under ideal conditions. This was intentionally chosen
to focus on the question: can we learn effective sparse representations 
using the \scalg\ objective?
With the promising results here, future work needs to investigate the utility of jointly estimating the representation and learning the value function, as well as providing incremental algorithms for learning the representations and setting the regularization parameters.

\section{Conclusion}

In this work, we investigated sparse coding for policy evaluation in reinforcement learning.
We proposed a supervised sparse coding objective, for joint estimation
of the dictionary, sparse representation and value function weights. 
We provided a simple algorithm that uses alternating minimization
on these variables, and proved that this simple and easy-to-use approach
is principled. 
We finally demonstrate results on three benchmark domains, Mountain Car, Puddle World and Acrobot,
against a variety of configurations for tile coding.

This paper provides a new view of using dictionary learning techniques from machine learning
in reinforcement learning. It lays a theoretical and empirical foundation for further investigating sparse coding, and other dictionary learning approaches, for policy evaluation and suggests that they show some promise. 
Formalizing representation learning as a dictionary learning problem
facilitates extending recent and upcoming advances in
unsupervised learning to the reinforcement learning setting.
For example, though we considered a batch gradient descent approach for this first investigation,
the sparse coding objective is amenable to incremental estimation, with several works
investigating effective stochastic gradient
descent algorithms \citep{mairal2009supervised,mairal2010online,le2016global}.
The generality of the approach and easy to understand optimization
make it a promising direction for representation learning in reinforcement learning.

{
\footnotesize
\bibliographystyle{named}
\bibliography{paper}
}

\end{document}